\DeclareMathOperator{\Span}{span}
\DeclareMathOperator{\rank}{rank}
\DeclareMathOperator*{\argmin}{arg\,min}
\newtheorem{lemma}{Lemma}
\begin{document}
\spacing{1}
\title{ \bf  Real-Time Learning of Predictive Dynamic Obstacle Models for Robotic Motion Planning}

\author{%
Stella Kombo$^{1}$, Masih Haseli$^{1}$, Skylar X. Wei$^{2}$, and Joel W. Burdick$^{1}$%
\thanks{*This work was supported by the Defense Advanced Research Projects Agency (DARPA) under the LINC program.}%
\thanks{$^{1}$S. Kombo, M. Haseli, and J. W. Burdick are with the Division of Engineering and Applied Science, California Institute of Technology, Pasadena, CA 91125, USA.
${\tt\small \{skombo, mhaseli, jburdick\}@caltech.edu}$}%
\thanks{$^{2}$S. X. Wei is with Applied Intuition, Mountain View, CA 94041, USA.
${\tt\small skylar@applied.com}$}%
}

\maketitle
\begin{abstract}
Autonomous systems often must predict the motions of nearby agents from partial and noisy data. This paper asks and answers the question: "can we learn, in real-time, a nonlinear predictive model of another agent's motions?" Our online framework denoises and forecasts such dynamics using a modified sliding-window Hankel Dynamic Mode Decomposition (Hankel-DMD). Partial noisy measurements are embedded into a Hankel matrix, while an associated Page matrix enables singular-value hard thresholding (SVHT) to denoise the Hankel matrix and estimate its rank. A Cadzow projection enforces structured low-rank consistency, yielding a denoised trajectory and local noise variance estimates. From this representation, a time-varying Hankel-DMD {\em lifted} linear predictor is constructed for multi-step forecasts. The residual analysis provides variance-tracking signals that can support downstream estimators and risk-aware planning. We validate the approach in simulation under Gaussian and heavy-tailed noise, and experimentally on a dynamic crane testbed. Results show that the method achieves stable variance-aware denoising and short-horizon prediction suitable for integration into real-time control frameworks.
\end{abstract}
\section{Introduction}\label{intro}

Autonomous robotic systems often operate in the presence of other dynamic and non-coordinated agents. For example, autonomous cars must navigate around vehicles, pedestrians, and cyclists \cite{gulzar2021survey,yaqoob2020autonomous}.
In autonomous drone racing, drones must avoid crashing into other drones on the racecourse \cite{spica2018realtime, hanover2023survey_arxiv} and in maritime robotics, a shipboard robotic arm or autonomous crane must plan for the payload's motion while compensating for the ship's sea-induced oscillations.
In such settings, the other agent's dynamics and intentions are typically unknown, while onboard sensing provides noisy, partial observations of the agent motions. Safe, efficient behavior therefore hinges on accurate short-horizon prediction of agent motion under uncertainty to enable collision-free planning and real-time control. Thus, this paper introduces a data-driven framework for real-time learning and short-horizon prediction from noisy, partial observations.

\subsubsection*{Related Work}\label{related_work}

Different approaches have been developed for integrating prediction and planning in dynamic environments where other agents’ dynamics and intentions are unknown. Classical geometric planners such as the Velocity-Obstacle (VO) family \cite{fiorini1998vo}, Reciprocal VO (RVO), Optimal Reciprocal Collision Avoidance (ORCA) and Acceleration Velocity Obstacles extensions (AVO)) \cite{vandenberg2011orca,vandenberg2008rvo,guy2009clearpath,vandenberg2011avo}, embed simple relative-motion estimates to predict collisions and select controls outside forbidden sets. These methods are efficient but assume simplified behavior models and near-perfect state estimates. However, real agents often violate these assumptions. For instance, some thrown objects may be roughly ballistic \cite{Goldstein2002Classical}, but aerodynamically complex bodies, e.g., frisbees, do not follow a simple ballistic model. Additionally, behavior-driven agents, e.g., pedestrians, require data-driven intent models to capture variability beyond constant-velocity assumptions \cite{Camara2020}. Lastly, the future trajectories of constraint-driven agents, e.g., autonomous vehicles, may be restricted by road geometry and traffic rules. Across all these examples, VO-style methods assume perfect knowledge of the moving agents’ states and neglect measurement noise or sensing delays \cite{hennes2012localization}. This reliance on idealized assumptions highlights the gap between geometric formulations and the stochastic, nonlinear realities of robotics applications.

When other agents’ dynamics are unknown, it is desirable to learn predictive models online from streaming observations. In practice, only partial and noisy measurements are available, while key latent states are unobserved, degrading forecasts and the safety of plans. Prior methods assume full-state observability for simplicity \cite{dutoit2010robotic}. This includes frameworks that learn dynamics directly from data \emph{offline}, e.g., classical AutoRegressive Moving Average (ARMA) models \cite{huang2015arma}, modern sequence learners such as Recurrent Neural Networks (RNNs), transformers, and tokenized representations \cite{talukder2024totem}. These off-line methods capture complex behavior but require large datasets and retraining \cite{saxena2025robot} and adapt poorly to distribution shift \cite{pierson2017deep}. \emph{Online} variants (e.g., Kalman Filtering (KF), recursive least squares, and online Gaussian processes \cite{ko2007gp}) update models during deployment, but presume known or structured noise distributions \cite{mckinnon2020context}.

Koopman theory is a complimentary alternative to learning these dynamic models. Formally, the Koopman operator models a nonlinear dynamical system via a \emph{linear operator} on a vector space of functions \cite{koopman1931hamiltonian}. Since the operator acts on an infinite dimensional vector space, real-time analysis and prediction is intractable. A practical approach is to \emph{approximate} the operator's action on a finite-dimensional subspace (which is typically of greater dimension than the state-space of the nonlinear system), enabling the use of well-developed methods from linear algebra and linear system theory. These approximations are often performed via orthogonal projections on the subspace of choice, also referred to as truncations of the operator's action. Dynamic Mode Decomposition (DMD) \cite{schmid2008dmd} and its variant Extended-DMD (EDMD) \cite{williams2015data} are well-known examples of such projection-based algorithms. Hankel-DMD uses time-delay embedding to estimate an effective state-space dimension \cite{arbabi2017koopman,kamb2020timedelay}. Such truncated Koopman-based models offer computationally feasible approximations of nonlinear dynamics with proven utility in robotics for prediction and control \cite{korda2018linear, folkestad2020episodic}. However, sensor measurements may be noisy, and since the fidelity of real-time Koopman predictors depends on data quality, noise degrades these learned models. 

Existing denoising frameworks to resolve noisy measurements prior to model fitting include classical low-rank denoising techniques i.e., truncated Singular Value Decomposition (tSVD), Principal Component Analysis (PCA) \cite{abdi2010pca} and Proper Orthogonal Decomposition (POD) \cite{berkooz1993pod}.  They provide optimal projections but rely on \emph{ad hoc} rank selection. Structured Hankel denoising \cite{yin2021lowrank} enforces temporal consistency but requires tuning and degrades under non-Gaussian noise. Scalable variants like randomized SVD \cite{halko2011rand} improve efficiency but depend on problem-specific thresholds, while Eigensystem Realization Algorithm (ERA) \cite{juang1985era} and Singular Spectrum Analysis (SSA) \cite{golyandina2001ssa} remain sensitive to noise.

\subsubsection*{Contribution}

We propose an adaptive, denoising, sliding-window Hankel–DMD framework for real-time prediction from noisy partial measurements. At each time step, a finite data buffer fits a local model, balancing responsiveness to non-stationary behavior with enough samples for reliability. We first denoise via Cadzow’s low-rank projection, with rank $\widehat r$ chosen by Singular Value Hard Thresholding (SVHT) on a Page matrix embedding.
We prove that, under mild conditions, Page and Hankel matrices have the same finite-sample rank, so the SVHT rank transfers to the Hankel matrix for Cadzow denoising. The resulting denoised Hankel matrices yield a sequence of local models and noise-variance estimates for uncertainty-aware planning. We validate the framework in both simulation and hardware experiments under Gaussian and non-Gaussian correlated heavy-tailed noise. We show that the framework offers robust denoised predictions suitable for real-time control and planning tasks.

\subsubsection*{Notation}
$\mathbb{R}$ and $\mathbb{N}$ denote real and natural numbers. $I_n$ and $0_{m\times n}$ denote the $n\times n$ identity and $m \times n$ zero matrices, respectively.  For matrix $A\in\mathbb{R}^{m\times n}$, $A^\top$, $A^\dagger$, and $\rank(A)$ denote its transpose, Moore–Penrose pseudo-inverse, and rank respectively.   We denote by $\operatorname{diag}(v)$, the $n\times n$ diagonal matrix with elements of $v\in \mathbb{R}^n$ on its main diagonal.   All linear combinations of vectors $\{v_1,\dots,v_k\}\subset \mathbb{R}^n$ is denoted by $\Span\{v_1,\ldots,v_k\}$.
We denote the variance of random variable $X$ by $\sigma_X^2$ and drop the subscript when the context is clear. For functions $f$ and $g$ with matching domain and co-domain, $f\circ g(x):= f(g(x))$ denotes their composition.

\section{Problem Definition} \label{sec:problem}

Consider a robot that operates in the vicinity of  another moving entity $\mathcal{O}$ whose behavior and dynamics are unknown. The robot's onboard sensors detect $\mathcal{O}$ and provide noisy, partial measurements of the obstacle state, denoted by $x_t \in \mathbb{R}^{n_x}$, which are sampled at a uniform interval $\Delta t \geq 0$, and capture kinematic quantities such as velocities or angular rates. While $x_t$ may not directly encode full Cartesian position, it carries sufficient temporal information to enable short-horizon motion prediction. The motion of $\mathcal{O}$ is governed by a discrete-time dynamical system with latent (unobserved) variables and unmeasured control inputs in the form: $z_{t+1} = \hat{f}(z_t,u_t)$ where $z_t$ denotes the state, which may include latent (unobserved) variables and $u_t$ an input from an unknown policy $u_t = c(z_t)$. Under these assumptions, we have an autonomous system 
\begin{align}\label{eq:system}
        z_{t+1} &= f(z_t) \,, 
        \nonumber\\
        x_{t}     &= C\,z_t + \eta_t,
\end{align}
where  measurement noise $\eta_t\in\mathbb{R}^{n_x}$ has an unknown distribution, and $C$ is the state-output map. This discrete time model generally arises from discretization of a continuous-time system. We maintain a sliding buffer of $N$ recent measurements $\mathcal{B}_t=\{x_{t-N+1},\ldots,x_t\}$ for online processing.

\textbf{Problem Statement:}
Given streaming noisy observations $\{x_t\}$ from a single trajectory and a sliding data buffer of fixed length $N$, construct, in real time, a tractable and adaptive local model. This model generates multi-step forecasts $\{\bar{x}_{t+1}, \ldots, \bar{x}_{t+N_h}\}$ over a prediction horizon $N_h \in \mathbb{N}$, suitable for downstream planning and control.

\section{Preliminaries}\label{sec:preliminaries}
Here, we define key concepts that are leveraged in our method such as Hankel and Page embeddings, Hankel-DMD, Cadzow  denoising, and Singular Value Hard Thresholding.

\subsection{Hankel and Page Matrices} 

Guided by Takens’ delay-embedding theorem (the dynamics of a system can be reconstructed from delay-coordinate maps of a single observable when the embedding dimension is sufficiently large \cite{Takens1981}), we apply delay embedding to the obstacle measurements to extract informative information about its dynamics. 
Given a sequence of $N$ measurements $\{x_{i-N+1},\ldots,x_i\}\subset\mathbb{R}^{n_x}$, we denote the associated  (block) \emph{Hankel} matrix $H^{\,L}_{i-N+1:i}\in \mathbb{R}^{(L n_x)\times (N-L+1)}$ by:
\begin{equation}\label{eq:hankel}
H^{\,L}_{i-N+1:i} \;=\;
\begin{bmatrix}
x_{i-N+1}   & x_{i-N+2}   & \cdots & x_{i-L+1} \\
x_{i-N+2}   & x_{i-N+3}   & \cdots & x_{i-L+2} \\
\vdots      & \vdots      & \ddots & \vdots    \\
x_{i-N+L}   & x_{i-N+L+1} & \cdots & x_{i}
\end{bmatrix}.
\end{equation}

The (block) \emph{Page}~\footnote{For $n_x = 1$, the matrices $H^{L}_{i-N+1:i}$ and $P^{L}_{i-N+1:i}$ take Hankel and Page forms, respectively. For $n_x > 1$, they become block Hankel and block Page. For simplicity, we refer to both cases as Hankel and Page structures.}matrix \(P^{\,L}_{i-N+1:i}\in\mathbb{R}^{(L n_x)\times m}\) partitions the same sequence into \(m\) non-overlapping blocks of length \(L\) (with \(m=N/L\))\footnote{Note that, unlike the Hankel matrix, in the Page matrix, the buffer length $N$ must be a multiple of the embedding window length $L$.}:

\begin{equation}\label{eq:page}
P^{\,L}_{i-N+1:i}=
\begin{bmatrix}
x_{i-N+1}   & x_{i-N+L+1} & \cdots & x_{i-L+1} \\
x_{i-N+2}   & x_{i-N+L+2} & \cdots & x_{i-L+2} \\
\vdots       & \vdots       & \ddots & \vdots    \\
x_{i-N+L}   & x_{i-N+2L}  & \cdots & x_{i}
\end{bmatrix}.
\end{equation}

\subsection{Hankel-DMD}\label{subsec:hankel-DMD}

We review a variant of Hankel-DMD (cf.~\cite{arbabi2017koopman,kamb2020timedelay}) suitable for our problem.  It uses delay embedding on the measurements of obstacle's observables ~\eqref{eq:system}. Given a measurement sequence $\{x_0, x_1, \ldots, x_n\}$, form the one step shifted Hankel matrices $H_{0:n-1}^L$ and $H_{1:n}^L$ according to the Eq. ~\eqref{eq:hankel}. Note that the columns of $H_{1:n}^L$ are shifted one time step forward from the columns of $H_{0:n-1}^L$.  To estimate the one-step dynamic prediction propagator, $A^{*}$, Hankel-DMD relies on the following least Frobenius-norm problem,  
\begin{equation}\label{eq:one_step_ls}
A^* \;=\; \argmin_{A}\,\|H_{1:n}^L - A H_{0:n-1}^L\|_F .
\end{equation}
The closed form solution for $A^*$ is
\begin{equation}\label{eq:one_step_closed_form-original}
A^* \;=\; H_{1:n}^L\,(H_{0:n-1}^L)^{\dagger} .
\end{equation}

Under standard assumptions of ergodicity and noise-free observables, Hankel-DMD can be used as a finite-dimensional approximation of the Koopman operator via delay embeddings \cite{arbabi2017koopman}. Our goal is to tackle the real-world cases where these assumptions do not hold.

\subsection{Cadzow Algorithm for Denoising}\label{sub:cadzow_proj}

Here, we review a variant of Cadzow's algorithm for denoising Hankel matrices\cite{Cadzow1988}. In the approximately linear {\em lifted} dynamics (e.g., Koopman-based approximations), noise-free data resides on a low-dimensional vector space. Hence, for a sufficiently large delay embedding window $L$, the noise-free Hankel trajectory matrix becomes low rank. However, with measurement noise, the observed Hankel matrix is generally full rank.   Let Hankel matrix $H^L \in \real^{(L\,n_x) \times (N-L+1)}$ be constructed from $N$ consecutive noisy measurements with embedding window $L$. Decompose $H^L$ as $H^L \;=\; \widehat H^L \;+\; \Delta$ with
   \begin{equation}\label{eq:cadzow_decomp}
     \operatorname{rank}\!\big(\widehat H^{\,L}\big)=r < \min\{L n_x,\,N{-}L{+}1\}
   \end{equation}
where $\Delta$ denotes measurement noise and $\widehat{H}^{\,L}$ is the noise-free Hankel matrix. The goal is to recover the (unknown) noise-free, low-rank Hankel matrix $\widehat H^{\,L}$ from $H^{\,L}$. Cadzow's algorithm aims for this objective by alternating projections of $H^L$ on the set of matrices with rank of at most $r$ with  projections of the result back onto the set of Hankel matrices.

Given a target rank $r$, define the projection map $\Pi_r$ onto the set of rank-$\le r$ matrices as
\begin{equation}\label{eq:Pi_r}
\Pi_r(H)\;:=\;\argmin_{\operatorname{rank}(Y)\le r}\,\|H-Y\|_F^2
\;=\;U\,\Sigma_r\,V^\top
\end{equation}
where $H=U\Sigma V^\top$ is the SVD  and $\Sigma_r$ is created by keeping the top $r$ singular values in $\Sigma$ while setting the rest to zero. The closed-form solution of~\eqref{eq:Pi_r} is a direct consequence of Eckart–Young–Mirsky theorem ~\cite{eckart1936approximation}.
Similarly, we define the projection map $\Pi_{\mathsf H}$ onto the set of Hankel matrices as
\begin{equation}\label{eq:Pi_Hankel}
\Pi_{\mathsf H}(M)\;:=\;\argmin_{Z\in\mathsf{Hankel}}\;\|M-Z\|_F^2.
\end{equation}
Interestingly, the optimization problem~\eqref{eq:Pi_Hankel} has a closed-form solution which can be calculated by replacing the members of each anti-diagonal of $M$ with their average. 
The procedure is as follows: decompose $M$ into $L$ by $N-L+1$ blocks similarly to the Hankel matrix\footnote{Note that each block is a column vector with $n_x$ rows similarly to Eq.~\eqref{eq:hankel}}. And denote by $M_{i,j}$ the $ij$th block. For each anti-diagonal offset $d=0,\ldots,N-1$, define:
\[
\mathcal I_d =\{(\ell,m):1\le \ell\le L,1\le m\le N{-}L{+}1,\ell+m-2=d\}\]
where $s_d \;=\; |\mathcal I_d|$ is number of elements in $\mathcal I_d$. Then, the anti-diagonal averaging can be computed as
\[
\big(\Pi_{\mathsf H}(M)\big)_{\ell,m}
\;=\;
\frac{1}{s_{\ell+m-2}}
\sum_{(i,j)\in \mathcal I_{\ell+m-2}} M_{i,j},
\]
for $1\le \ell\le L,\;1\le m\le N{-}L{+}1$.
The Cadzow algorithm successively applies the projections $\Pi_r$ and $\Pi_{\mathsf H}$ so that the sequence converges to a Hankel matrix, $\hat{H}^L$, that is approximately of rank $r$.
The solution of the Cadzow algorithm on $H^L$ after $n$ iterations is $[\Pi_{\mathsf H}\circ \Pi_r]^{\,n}\,(H^L)$ where $[\Pi_{\mathsf H}\circ \Pi_r]^{\,n}$ denotes the map created by composing $\Pi_{\mathsf H}\circ \Pi_r$, $n$-times with itself.

Notably, since projections $\Pi_r$ and $\Pi_{\mathsf H}$ can be computed in closed form, the Cadzow algorithm is an attractive choice for real-time denoising, as only a few iterations are sufficient to yield a reasonable signal-to-noise ratio.

\subsection{Singular Value Hard Thresholding (SVHT)}\label{subsec:rank}

Here we review a variant of Gavish and Donoho's result\cite{gavish2014optimal}. Consider a sequence (indexed by $a$) of noisy matrices \(Y_a=X_a+Z_a/\sqrt{n_a}\) where $Y_a\in\mathbb{R}^{m_a\times n_a}$ denotes a noisy matrix, $X_a$ the "true" low-rank matrix, and $Z_a$ denotes the noise matrix whose entries are i.i.d.\ zero-mean, unit-variance entries with finite fourth moment. As $a \to \infty$, the aspect ratio $m_a/n_a$ converge to $\beta \in (0,1]$\footnote{Following \cite{gavish2014optimal}, $a$ indexes a growing problem sequence with $Y_a\in\mathbb{R}^{m_a\times n_a}$, $m_a/n_a\to\beta$, and $Y_a=X_a+Z_a/\sqrt{n_a}$. Their AMSE–optimality results are asymptotic as $a\to\infty$.}. 

Given the SVD $Y_a = U\Sigma V^\top$ and a threshold $\tau$, we approximate the true matrix $X_a$ by $\widehat X_a \;=\; U\Sigma_{\geq \tau} V^\top$ where $\Sigma_{\geq \tau}$ is formed by zeroing the subthreshold elements of $\Sigma$.

Gavish and Donoho showed that the Asymptotic Mean Squared Error (AMSE)-optimal threshold for the data singular values depends only on the matrix aspect ratio $\beta$ via a constant $\lambda^\star(\beta)$:
\begin{equation}\label{eq:lambda_star}
\lambda^\star(\beta)
=\sqrt{\,2(\beta+1)+\frac{8\beta}{(\beta+1)+\sqrt{\beta^2+14\beta+1}}\,}.
\end{equation}
For the noise standard deviation $\sigma$, the optimal threshold is:
\begin{equation}
\tau^\star = \lambda^\star(\beta)\,\sigma\sqrt{n_a}.
\end{equation}
When $\sigma$ is unknown, the optimal threshold can be estimated using a data-driven approach through the Marchenko–Pastur (MP) law \cite{MarenkoPastur1967}, which gives the limiting eigenvalue distribution of \((1/n_a)\,{Z_a}{Z_a}^\top\) as \(m_a/n_a \to \beta\):
\[
p_\beta(\delta)=\frac{\sqrt{(\delta_+ - \delta)(\delta - \delta_-)}}{2\pi\beta\,\delta} \,
\mathbf{1}_{[\delta_-,\,\delta_+]}\!(\delta),\]
with \(p_\beta(\delta)=0\) outside \([\delta_-,\delta_+]\). The median of the MP distribution, denoted by $\mu_\beta$, depends only on $\beta$ and provides a normalization for estimating the noise level $\sigma$. Taking $\varsigma_{\mathrm{med}}$ as the median of the observed singular values, the data-driven AMSE--optimal threshold can thus be estimated as:
\begin{equation}\label{eq:data_driven_star_main}
\tau^\star(\beta) = \frac{\lambda^\star(\beta)}{\sqrt{\mu_\beta}}\,
\varsigma_{\mathrm{med}}.
\end{equation}
which remains asymptotically optimal under general white noise (i.i.d., zero mean, unit variance, finite fourth moment) with the same risk as in the Gaussian case \cite[Sec.~VI]{gavish2014optimal}.

\section{
Adaptive Hankel-DMD for Noisy Data
}\label{sec:method}
We now address the problem outlined in Sec.~\ref{sec:problem}. To enable real-time obstacle dynamics learning and motion prediction, we consider the use of Hankel-DMD (cf. Sec.~\ref{subsec:hankel-DMD}). However, applying Hankel-DMD in real-time robotics applications presents two key challenges:
\begin{itemize}
\item \textbf{Sensor noise:} sensor signals are typically noisy, which contaminates  matrices $H_{0:n-1}^L$ and $H_{1:n}^L$. In least-squares schemes such as~\eqref{eq:one_step_ls}, noise in both data matrices leads to biased or inconsistent estimators~\cite{gleser1981noisy}.
\item \textbf{Ergodicity and long trajectory data requirements:} as noted in Sec.~\ref{subsec:hankel-DMD}, one must assume ergodicity and long data sequences to  connect Hankel-DMD and the Koopman operator. Such assumptions rarely hold in robotics applications.
\end{itemize}

To tackle the noise issue, we introduce a Hankel matrix denoising scheme. We then propose an adaptive sliding-window variation of Hankel-DMD that continually updates the model. This approach ensures that the predictive model remains accurate as the object's state evolves over time.

\subsection{Denoising Hankel Matrices}

Consider obstacle output measurements $\mathcal{B}_t=\{x_{t-N+1},\ldots,x_t\}$ for buffer size $N$, collected up to time $t$. We construct a Hankel matrix $H^{\,L}_{t-N+1:t}$  from $\mathcal{B}$ according to  Eq.~\eqref{eq:hankel}. Since the measurements are noisy, we seek to denoise the Hankel matrix $H^{\,L}_{t-N+1:t}$ via  Cadzow's algorithm, as described in Sec.~\ref{sub:cadzow_proj}.  A key step in this algorithm is the low-rank projection $\Pi_r$ in Eq.~\eqref{eq:Pi_r}, which requires knowledge of the effective rank $r$ of the noise-free Hankel matrix. In practice, this typically unknown rank must be estimated. An rank estimat error can cause over-smoothing (if underestimated) or noise amplification (if overestimated). To avoid these problems, we adopt a principled, data-driven rank selection strategy based on Singular Value Hard Thresholding (SVHT) (Sec.~\ref{subsec:rank}). However, the SVHT framework assumes i.i.d. noise, an assumption that is violated in Hankel matrices due to repeated entries. To address this, we employ Page matrices (Eq.~\eqref{eq:page}), whose partition of the measurement buffer into non-overlapping blocks avoids correlation.

\subsubsection{Step I: Page-Hankel Rank Transfer}
As a first step toward estimating the rank of the noise-free Hankel matrix using SVHT on Page matrices, we show that, under mild conditions, the Page and Hankel embeddings of the \emph{same} noise-free measurement buffer share the same rank.

\begin{lemma}[Page–Hankel Rank Equivalence]\label{lem:rank-equivalence}
Consider a locally valid linear output model
\[
    z_{k+1}=A z_k,\qquad x_k = C z_k,
\]
with $z_k\in\mathbb{R}^{n_z}$, $A\in\mathbb{R}^{n_z\times n_z}$, and $C\in\mathbb{R}^{n_x\times n_z}$. Let  $\{x_{0} \dots,x_{N-1}\}\subset\mathbb{R}^{n_x}$ 
be the noise-free measurements generated by the system above from the initial condition $z_0$. Let $L \geq n_z$
be the embedding window and let $N = dL$, with $d \geq L$. Define\footnote{To avoid confusion with superscript $L$, we use  $(A)^L$ to denote the matrix created by raising $A$ to the power of $L$.} $B = (A)^L$ and let $P^L$ and $H^L$ be the block Page and Hankel matrices constructed via the data. Then, $\rank(P^L) = \rank(H^L)$ if 
\begin{equation*}
    \Span\{z_0, Bz_0, \ldots, (B)^{d-1} z_0\} = \mathbb{R}^{n_z}.
\end{equation*}
\end{lemma}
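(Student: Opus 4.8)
The plan is to express both $H^L$ and $P^L$ as products of an extended observability matrix with a matrix built from the reachability-type Krylov vectors of the state trajectory, and then argue that both factorizations have the same rank because the relevant factors are full rank under the stated spanning hypothesis. First I would introduce the block observability matrix
\begin{equation*}
\mathcal{O}_L \;=\; \begin{bmatrix} C \\ CA \\ \vdots \\ CA^{L-1}\end{bmatrix} \in \mathbb{R}^{(Ln_x)\times n_z},
\end{equation*}
and note that, since $x_k = CA^k z_0$, the $j$-th column block of $H^L$ (for $j = 0,\dots,N-L$) equals $\mathcal{O}_L A^{j} z_0$, so that $H^L = \mathcal{O}_L \, K_H$ where $K_H = [\,z_0 \ A z_0 \ \cdots \ A^{N-L} z_0\,] \in \mathbb{R}^{n_z\times(N-L+1)}$. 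Likewise the $j$-th column block of $P^L$ ($j = 0,\dots,d-1$) equals $\mathcal{O}_L A^{jL} z_0 = \mathcal{O}_L B^{j} z_0$, giving $P^L = \mathcal{O}_L \, K_P$ with $K_P = [\,z_0 \ B z_0 \ \cdots \ B^{d-1} z_0\,] \in \mathbb{R}^{n_z\times d}$.

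The hypothesis $\Span\{z_0, Bz_0,\dots,B^{d-1}z_0\} = \mathbb{R}^{n_z}$ says exactly that $K_P$ has rank $n_z$, i.e. full row rank. Since the columns of $K_P$ form a subset (after reindexing by multiples of $L$) of the columns of $K_H$ — indeed the column $A^{jL}z_0$ of $K_H$ is the $j$-th column of $K_P$ — the column space of $K_H$ contains that of $K_P$, hence $\operatorname{rank}(K_H) \ge \operatorname{rank}(K_P) = n_z$; and trivially $\operatorname{rank}(K_H)\le n_z$, so $K_H$ also has full row rank $n_z$. Now both $H^L$ and $P^L$ are obtained by left-multiplying a full-row-rank $n_z$-column matrix by the same matrix $\mathcal{O}_L$; a standard fact is that $\operatorname{rank}(\mathcal{O}_L K) = \operatorname{rank}(\mathcal{O}_L)$ whenever $K$ has full row rank (right-multiplication by a surjective linear map does not change the image). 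Therefore $\operatorname{rank}(H^L) = \operatorname{rank}(\mathcal{O}_L) = \operatorname{rank}(P^L)$, which is the claim. (As a by-product the common rank equals $\operatorname{rank}(\mathcal{O}_L)$, which is $n_z$ when the pair $(C,A)$ is observable, consistent with $L\ge n_z$, though observability is not needed for the equality itself.)

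The one place that needs a little care — and the main obstacle — is verifying that right-multiplication by a full-row-rank matrix preserves rank in the \emph{block} setting, and making sure the column-indexing bookkeeping in the Hankel-versus-Page factorization is correct (the Page matrix requires $N = dL$ so that the blocks partition the buffer exactly, and one must check $d-1 \le N-L$, i.e. that every Page column block indeed appears among the Hankel column blocks, which holds since $(d-1)L \le N-L \Leftrightarrow dL \le N + L - L = N$, using $L\ge 1$). The condition $d \ge L$ in the statement guarantees there are at least $L$ Page blocks, i.e. $P^L$ has at least as many columns (blocks) as rows (block-rows), so that the full-row-rank hypothesis on $K_P$ is not vacuous; I would flag this but it plays no role beyond well-posedness. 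No deep machinery is required: the whole argument is the observability/reachability factorization of block Hankel and block Page matrices plus the elementary rank lemma.
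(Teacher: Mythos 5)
Your proof is correct and follows essentially the same route as the paper: both factor $H^L$ and $P^L$ as the extended observability matrix times Krylov matrices in $A$ and $B=(A)^L$, and both use the spanning hypothesis together with the fact that the Page Krylov columns are a subset of the Hankel Krylov columns to conclude that both right factors have full row rank $n_z$. The only (immaterial) difference is the finishing step: you argue that right-multiplication by a full-row-rank factor preserves the column space, whereas the paper combines $\rank(EF)\le\rank(E)$ with Sylvester's rank inequality; both give $\rank(H^L)=\rank(P^L)=\rank(E)$.
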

\begin{proof}[Proof of Lemma~\ref{lem:rank-equivalence}]
Given the definition of system and the initial condition, one can write $x_i= C(A)^i z_0$ for $i \in \{0,\ldots,N-1\}$. Therefore, one can easily decompose the Page and Hankel matrices as
\begin{equation}\label{eq:page-hankel-decompose}
P^L=EF, 
\qquad
H^L=EG,
\end{equation}
where 
\begin{align*}
E &= [C^\top, A^\top C^\top, \ldots, \big((A)^{L-1}\big)^\top C^\top]^\top \in \mathbb{R}^{(n_x L)\times n_z},
\\
F&=[z_0,\,Bz_0,\,(B)^2 z_0,\,\ldots,\,(B)^{d-1}z_0] \in\mathbb{R}^{n_z\times d},
\\
G&=[z_0,\,Az_0,\,(A)^2z_0,\,\ldots,\,(A)^{(d-1)L}z_0] \in\mathbb{R}^{n_z\times((d-1)L+1)}.
\end{align*}
Note that by the hypothesis, $\rank(F)=n_z$, which implies that $F$ has full row rank. Moreover, noting that $B=(A)^L$, one can easily see that all columns of $F$ are in the set of columns of $G$; hence, $\rank(G) \geq \rank(F) = n_z$. However, note that $G \in\mathbb{R}^{n_z\times((d-1)L+1)}$ has more columns than rows; therefore, its rank can be at most $n_z$. Hence, one can conclude that both $F$ and $G$ have full row rank and
\begin{equation}\label{eq:rankFG}
\rank(F) = \rank(G) = n_z.
\end{equation}
Moreover, using~\eqref{eq:page-hankel-decompose}, the rank equality~\eqref{eq:rankFG}, and noting that $\rank(E) \leq n_z$ (due to its size) one can write
\begin{align}\label{eq:rank-upper-bound}
\rank(P^L) \leq \min(\rank(E),\rank(F)) = \rank(E),
\nonumber \\
\rank(H^L) \leq \min(\rank(E),\rank(G)) = \rank(E).
\end{align}
On the other hand, Sylvester’s rank inequality gives
\begin{align}\label{eq:rank-lower-bound}
\rank(P^L) = \rank (EF) \geq \rank(E)+\rank(F)-n_z,
\nonumber \\
\rank(H^L) = \rank (EG) \geq \rank(E)+\rank(G)-n_z.
\end{align}
Inequalities~\eqref{eq:rank-upper-bound}-\eqref{eq:rank-lower-bound} in conjunction with~\eqref{eq:rankFG} lead to $\rank(P^L) = \rank(H^L) = \rank(E)$ concluding the proof.
\end{proof}

Lemma~\ref{lem:rank-equivalence} provides a convenient way to connect the rank of Page and Hankel embedding of noise-free measurement buffers. Before proceeding to rank estimation, we briefly explain that the conditions of Lemma~\ref{lem:rank-equivalence} are mild. First, it is worth noting that with a sufficiently high sampling frequency, the system states associated with a given measurement buffer cluster within a small region of the state space, where a local linear model can accurately approximate the system’s behavior\footnote{This suggests that the buffer window size and sampling frequency should be chosen jointly to balance fidelity and sensitivity. An excessively large buffer reduces sensitivity to low-rank structures that capture local behavior, whereas an overly small buffer risks discarding expressive information about the system dynamics.}. In addition, the condition $\Span\{z_0, Bz_0, \ldots, (B)^{d-1} z_0\} = \mathbb{R}^{n_z}$ is generic\footnote{If the elements of $A$ (with $B = (A)^L$) and $z_0$ in Lemma~\ref{lem:rank-equivalence} are drawn from a reasonable distribution, the condition holds with probability one.} and holds for almost all matrices $B$ and vectors $z_0$.

Now, with Lemma~\ref{lem:rank-equivalence} at our disposal, we can estimate the rank of the noise-free Hankel matrix associated with $H^{\,L}_{t-N+1:t}$ by applying SVHT on the Page matrix constructed with the same data sequence. 
\subsubsection{Rank Estimation using SVHT}
To apply the SVHT (cf. Sec.~\ref{subsec:rank}) on the Page matrix, the number of rows in the matrix should be less than or equal to the number of columns. Therefore, for the rest of the paper, we consider the buffer length $N = m L n_x$, with $m \geq L n_x$. Note that this choice already satisfies the size condition in Lemma~\ref{lem:rank-equivalence}.

To construct the Page matrix $P^{\,L}_{t-N+1:t} \in \real^{L n_x \times m}$ from the same data sequence used in $H^{\,L}_{t-N+1:t}$.
By applying the SVHT formula in Eq.\eqref{eq:data_driven_star_main}, on the Page matrix, we can estimate the singular value threshold $\tau^*$ and approximate the effective rank of $P^{\,L}_{t-N+1:t}$ as
\begin{equation}\label{eq:svht_rank_hankel}
\widehat r \;=\; \#\bigl\{\, i\in\{1,\ldots,L n_x\}:\ \varsigma_i \ge \tau^\star \,\bigr\}
\end{equation}
where $\{\varsigma_i\}_{i=1}^{L n_x}$ are the singular values of $P^{\,L}_{t-N+1:t}$. 

It is important to note that the rank $\widehat r$ estimated via Page SVHT is not fixed but evolves as the sliding window advances. As the buffer shifts, the local data distribution changes, and the singular spectrum captures the instantaneous richness of the underlying dynamics. Consequently, small fluctuations or occasional variations in $\widehat r$ are to be expected. Far from being a drawback, this adaptivity ensures that the thresholding remains sensitive to regime shifts and transient behaviors, ultimately producing more reliable models when the system exhibits greater dynamical complexity.

Sec.~\ref{subsec:rank} also yields a conservative local noise–variance estimate. Singular values below the cutoff in Eq.~\eqref{eq:data_driven_star_main} are treated as noise, while those above define the rank via Eq.~\eqref{eq:svht_rank_hankel}. Let \(\varsigma_{\mathrm{med}}\) be the median singular value of the \emph{Page} matrix with aspect ratio \(\beta\) and \(m\) columns, and let \(\mu_\beta\) denote the median of the Marchenko–Pastur law \cite{MarenkoPastur1967} at aspect ratio \(\beta\). Then the noise variance $\widehat{\sigma}^{2}$ can be conservatively estimated as $\widehat{\sigma}^{2} \approx \varsigma_{\mathrm{med}}^{2}/\mu_{\beta}\,m$. 
Under the assumption of i.i.d. sensor noise, we approximate the delay–state noise covariance as \(\widehat{\nu}\approx \widehat{\sigma}^{2} I_{L n_x}\). This approximation treats each delay state as being perturbed by independent, isotropic noise, which is appropriate in the i.i.d. setting but may underestimate correlations in more structured noise processes. A concrete example of leveraging such variance information within a risk-aware framework is provided in \cite{swei}.

\subsubsection{Cadzow Algorithm}
Now, by using the rank estimate $\hat{r}$ in Eq.\eqref{eq:svht_rank_hankel} and utilizing Lemma~\ref{lem:rank-equivalence}, we can finally apply the Cadzow algorithm in Sec.\ref{sub:cadzow_proj} on Hankel matrix $H^{\,L}_{t-N+1:t}$ with effective rank $\hat{r}$ until we approximate a denoised Hankel matrix which we denote by $\widehat H^{\,L}_{t-N+1:t}$.

\subsection{Online Identification and Multi-Step Prediction}
With the denoised Hankel matrix $\hat H^{\,L}_{t-N+1:t}$ obtained, we proceed to model the system dynamics using Hankel-DMD.
However, as mentioned earlier, to connect the behavior of the dynamics learned by Hankel-DMD in Eqs.\eqref{eq:one_step_ls}-\eqref{eq:one_step_closed_form-original} to the system's global behavior via Koopman operator, one often requires assumptions of ergodicity on the system and access to very long trajectories. These assumptions rarely hold in robotics applications. To circumvent this issue, we apply an adaptive sliding window strategy to build a linear model for each buffer and update it as we receive new data. 

Formally, let $\hat H^{\,L}_{t-N+1:t-1}$ and $\hat H^{\,L}_{t-N+2:t}$ be the Hankel matrices constructed by taking the first and last $N-L$ columns of $\hat H^{\,L}_{t-N+1:t}$ respectively. 

We then solve a least Frobenius-norm problem similar to Eq.~\eqref{eq:one_step_ls} as:
\begin{equation}\label{eq:one_step_pred_ls}
\widehat A_t = \arg\min_{A_t}\,\big||\hat {H}_{{t-N+2:t}}^{L}-A_t\hat{H}_{t-N+1:t-1}^{L}\big||_F
\end{equation}
with the closed-form solution 
\begin{equation}\label{eq:one_step_closed_form}
\widehat A_t\;=\; \hat{H}_{{t-N+2:t}}^{L}\,(\hat{H}_{t-N+1:t-1}^L)^{\dagger}.
\end{equation}
We then define the local predictor dynamics as
\begin{equation}\label{eq:linear-predictor}
  \psi_{m+1} = \widehat{A}_t\, \psi_m, \quad m \in \{t, t+1, \ldots\}.
\end{equation}
If we initialize the system~\eqref{eq:linear-predictor} by setting $\psi_t$ to be equal to the last column of $\hat H^{\,L}_{t-N+2:t}$ (which corresponds to the denoised version of the lifted state at the current time), we can forecast the future outputs of system~\eqref{eq:system} by running the predictor~\eqref{eq:linear-predictor} and extracting the last block of the delay embedded state as
\begin{equation}\label{eq:multi_step_rollout}
\bar{x}_{t+j} \;=\; D \, \big(\widehat{A}_t\big)^{j}\,\psi_t^* \quad j \in {1, \ldots, N_h},
\end{equation}
where $\psi_t^*$ is set to be the last column of $\hat H^{\,L}_{t-N+2:t}$ and matrix $D \in \real^{n_x \times L n_x}$ is defined as $D = [0_{n_x \times (L-1)n_x}, I_{n_x}]$.

At the next time step, we advance the buffer to $\mathcal{B}_{t+1}=\{x_{t-N+2},\ldots,x_{t+1}\}$ and repeat the aforementioned procedure to produce a time-varying sequence of linear predictors characterized by $\{\widehat A_t\}_{t\ge N}$. These predictors are well suited for Model Predictive Control (MPC) by providing linear predictions over the planning horizon.
Algorithm \ref{alg:page_svht_cadzow} compiles the full sliding-window procedure.

\begin{algorithm}[htb]
\caption{Adaptive Sliding-Window Page--Hankel DMD Predictor}
\label{alg:page_svht_cadzow}
\SetKwInput{KwIn}{Input}
\SetKwInput{KwOut}{Output}
\KwIn{Embedding window $L$ 
\newline
Buffer Length $N = m L n_x$ with $m\geq L n_x$
\newline
Buffer $\mathcal{B}_t=\{x_{t-N+1},\ldots,x_t\}$ with $x_t\in\mathbb{R}^{n_x}$
\newline
Prediction horizon $N_h$}
\KwOut{Denoised Hankel $\widehat H^{\,L}_{t-N+1:t}$
\newline Model $\widehat A_t$ and Forecasts $\{\bar {x}_{t+1},\ldots,\bar {x}_{t+N_h}\}$
\newline Noise covariance estimate $\widehat\nu$}
\BlankLine

\textbf{(1) Page \& SVHT:}
Form $P^{\,L}_{t-N+1:t}$ per Eq.~\eqref{eq:page} 
\newline
Compute SVD $P^{\,L}_{t-N+1:t}=U\,\mathrm{diag}(\varsigma_i)\,V^\top$.
\newline
Set $\beta=(L n_x)/m$ and $\varsigma_{\mathrm{med}}=\mathrm{median}\{\varsigma_i\}$
\newline
Set cutoff $\tau^\star =\lambda^\star(\beta)/{\sqrt{\mu_\beta}}$ via Eq.~\eqref{eq:data_driven_star_main}
\quad $\widehat r \leftarrow \#\{i:\,\varsigma_i\ge\tau^\star\}$ per Eq.~\eqref{eq:svht_rank_hankel}) \smallskip
\newline
\textbf{(2) Cadzow Algorithm for $J$ Iterations:} $\widehat H^{\,L}_{t-N+1:t} \gets [\Pi_{\mathsf H}\!\circ\!\Pi_{r}]^{J}\!\big(H^{\,L}_{t-N+1:t}\big)$
\smallskip

\textbf{(3) Hankel Partitions and Predictor at $t$:}
\newline
$\hat{H}_{t-N+1:t-1}^L, \hat{H}_{t-N+2:t}^L$
\newline
$\widehat A_t\;=\; {\widehat{H}}_{{t-N+2:t}}^{L}\,(\widehat{H}_{t-N+1:t-1}^L)^{\dagger}$ as per Eq.~\eqref{eq:one_step_closed_form}
\smallskip

\textbf{(4) Prediction ($N_h$ steps):}
Let ${\psi_t}^\star$ be the last column of $\widehat H^{\,L}_{t-N+2:t}$. For $j=1,2,\ldots,N_h$, $\bar{x}_{t+j}=D(\widehat A_t)^j\,{{\psi}_t}^\star$ per Eq.~\eqref{eq:multi_step_rollout} \smallskip

\textbf{(5) Noise Variance Estimation:}
With $m$ Page columns and MP median $\mu_\beta$, set
$\widehat\sigma^{2}=\varsigma_{\mathrm{med}}^{2}/(\mu_\beta\,m)$ and
$\widehat\nu=\widehat\sigma^{2} I_{L n_x}$. \smallskip

\textbf{(6) Slide Window:}
$\mathcal{B}_{t+1}\leftarrow\{x_{t-N+2},\ldots,x_{t+1}\}$ and repeat (1)–(5).
\end{algorithm}

\subsection{Comparison with Alternative Online Prediction Methods} \label{sec:comparison}
Classical online noise filtering approaches, such as the KF and its nonlinear variants\cite{Maybeck1979_kf}, assume parametric dynamics and structured noise models. While online Gaussian processes\cite{soton259182, var_KF} and neural sequence models\cite{HS-neural_seq, GP_2015} provide strong predictive capability, they can require significant offline training and careful tuning for online deployment. In contrast, DMD\cite{Zhang2017onlinedmd} and EDMD\cite{williams2015data} are operator-theoretic system identification methods that approximate the Koopman operator from data. These formulations do not account for measurement noise and can exhibit bias under finite, noisy datasets\cite{hemati2017debiased, Dawson2016}, particularly when nonlinear lifting is used\cite{Haseli2019NoiseResilientEDMD}. Although noise-aware formulations and regularized variants exist\cite{Masih_robolearning}, these methods mainly improve dynamical consistency rather than provide a framework for denoising streaming measurements in finite-data regimes. our proposed method explicitly performs structured low-rank denoising prior to short-horizon prediction. Designed for finite, streaming data with limited computational budgets, the framework combines adaptive rank selection with structured matrix embeddings to enable stable real-time performance.

\section{Simulations and Experiments}
\label{sec:exp}
We apply Algorithm~\ref{alg:page_svht_cadzow} to validate  out framework in simulations and real-world experiments. Our evaluation examines (i) signal–to-noise (SNR) separability, (ii) noise variance estimation, and (iii) multi-step prediction accuracy. Experiments consider open-loop, short-horizon prediction to assess the accuracy, stability and noise robustness of the learned dynamics model. These properties are critical for downstream planning and control frameworks \cite{Das2026SafePayloadCrane}.

\subsection{Simulation: Noisy Unicycle}\label{sec:sim-uni}
Consider a reference unicycle model
\(
  \dot x = u_1\cos\theta,
  \dot y = u_1\sin\theta,
  \dot\theta = u_2
\),\cite{klancar2017wheeled,aicardi1995unicycle}, which serves as a dynamic obstacle moving along a figure-eight trajectory of amplitude $a=3\,\mathrm{m}$ and period $T=40\,\mathrm{s}$. An ego agent seeks to reach its goal while avoiding collisions with the unicycle, but has access to only noisy velocity measurements of the unicycle rather than the full state. Sampling the reference trajectory at
$\Delta t=0.02\,\mathrm{s}$ yields the forward velocity profile $u_1^{\star}(t)$. We inject sensor noise from two distributions: (i) i.i.d.\ Gaussian, $\eta_k \sim \mathcal{N}(0,\sigma_x^2)$ with $\sigma_x=0.25\,\mathrm{m/s}$ and (ii) correlated heavy-tailed AR(1)–Laplace, $\eta_t = \rho\,\eta_{t-1} + w_t,\,
w_t \sim \mathrm{Laplace}(0,b),\; |\rho|<1$. For fair comparison, $b$ is chosen so that the stationary variance matches the Gaussian case, $\operatorname{Var}[\eta_t] = \sigma_x^2$, i.e.\ $b=\sqrt{(1 -\rho^2)\sigma_x^2/2}$. In both cases, the noisy measurement stream is $v_t = u_1^{\star}(t) + \eta_t$.

\begin{figure}[tb]
    \centering
    \begin{minipage}{\linewidth}
        \centering
        \includegraphics[width=0.95\linewidth]{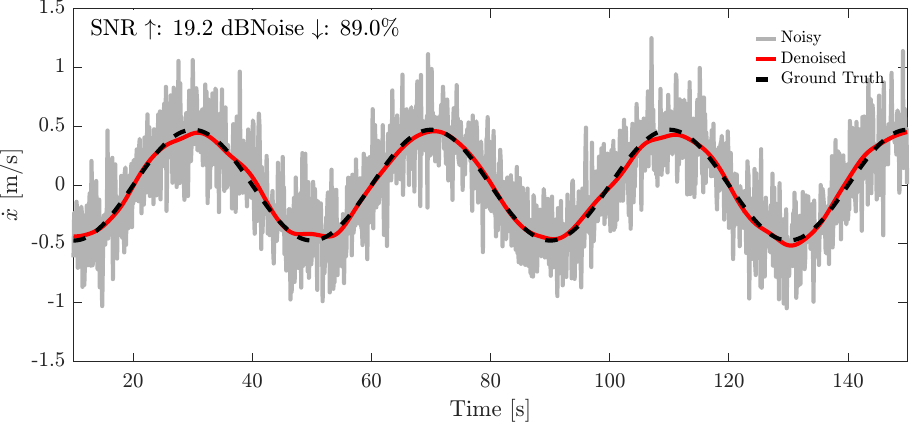}
        \begin{minipage}{0.49\linewidth}
            \centering
            \includegraphics[width=.95\linewidth]{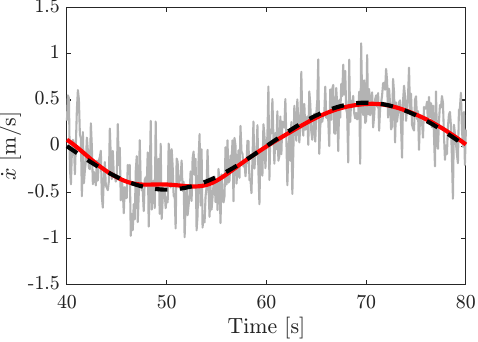}
        \end{minipage}
        \hfill
        \begin{minipage}{0.49\linewidth}
            \centering
            \includegraphics[width=.95\linewidth]{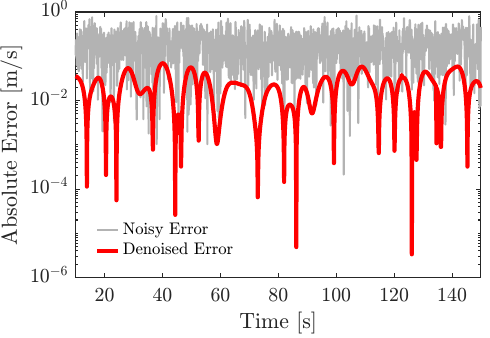}
        \end{minipage}
        \caption{\scriptsize Gaussian noise: Top: noisy (gray), denoised (red) and ground truth (black dashed); call-outs show SNR improvement and average noise reduction. Bottom: zoomed segment and log-scale error.}
        \label{fig:denoise-gauss}
    \end{minipage}
\end{figure}

\textbf{Denoising results (Gaussian)}: Fig.~\ref{fig:denoise-gauss} demonstrates that our Page-Hankel SVHT framework achieves stable recovery of the ground-truth velocity profile under Gaussian sensor noise. The denoised estimate (red, top panel) tracks the ground truth (dashed black) without visible phase lag while suppressing high-frequency artifacts that often contaminate raw measurements. Over the $160$\,s trajectory, the method delivered an SNR gain of $19.2$\,dB and an average noise reduction of $89.0\%$ (call-out, top-left), corresponding to nearly an order-of-magnitude improvement over baseline. The zoomed view (bottom-left) highlights that turning points and low-curvature segments, often degraded by conventional low-pass filters, were preserved, while the log-scale absolute error confirms a $10$-$100\times$ reduction in residual magnitude. These results indicate that the method can denoise aggressively without compromising structural features critical for downstream control. 
\begin{figure}[tb]
    \centering
    \begin{minipage}{\linewidth}
        \centering
        \includegraphics[width=\linewidth]{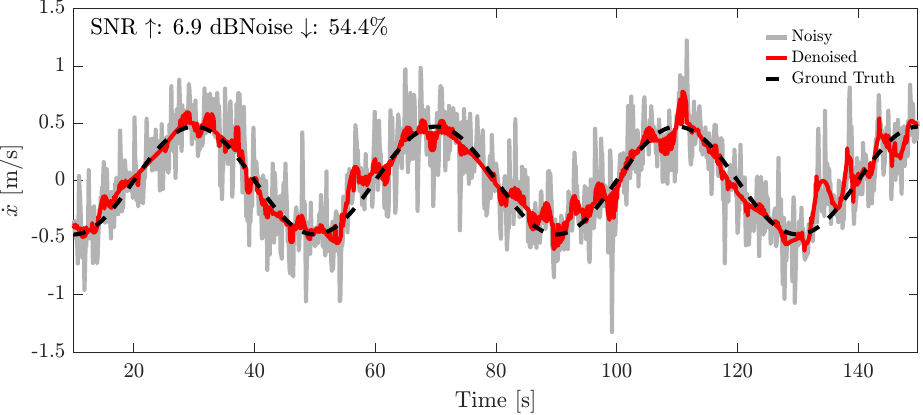}
        \begin{minipage}{0.49\linewidth}
            \centering
            \includegraphics[width=\linewidth]{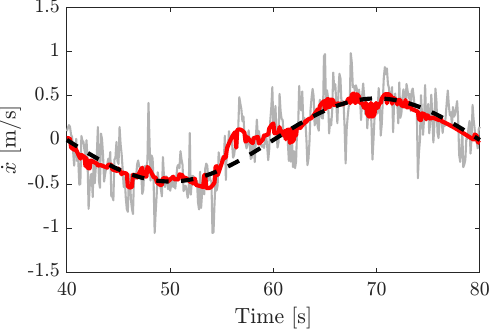}
        \end{minipage}
        \hfill
        \begin{minipage}{0.49\linewidth}
            \centering
            \includegraphics[width=\linewidth]{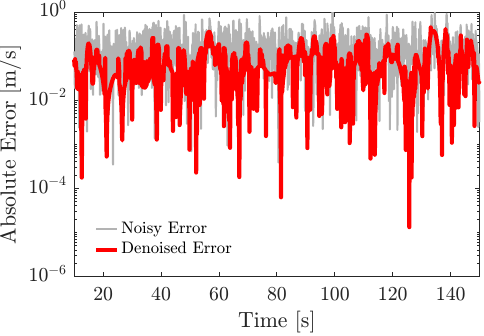}
        \end{minipage}
        \caption{\scriptsize AR(1)-Laplace noise: The denoising algorithm remains effective under heavy-tailed, correlated disturbances: call-outs: SNR $+6.9$\,dB and $54.4\%$ noise reduction.}
        \label{fig:denoise-laplace}
    \end{minipage}
\end{figure}

\textbf{Denoising results (Heavy tailed, correlated noise):} To probe robustness under non-Gaussian disturbances, we injected temporally correlated AR(1)-Laplace noise (Fig.~\ref{fig:denoise-laplace}). Despite heavy tails and temporal correlation, we observed an SNR gain of $6.9$\,dB and an average noise reduction of $54.4\%$, with no visible phase lag. While performance is lower than under Gaussian noise, as to be expected, the degradation remains gradual. Crucially, the low-rank Hankel structure proved distribution-agnostic i.e., SVHT adaptively trims singular values inflated by heavy-tailed noise requiring explicit Gaussian assumptions on the noise statistics.

\textbf{Comparison with Extended Kalman Filter (EKF)-based denoising:} For reference, we evaluated an EKF as a denoising pipeline under the same Gaussian disturbance conditions defined in Sec.~\ref{sec:sim-uni}. When the process and measurement covariance matrices are carefully tuned using approximate knowledge of the noise moments, the EKF achieves moderate reconstruction quality, yielding an SNR of approximately $0.6$\,dB. However, even in this tuned setting the EKF exhibits a noticeable phase delay (Fig.~\ref{fig:denoise-ekf}, top), which increases the point-wise reconstruction error used in the SNR calculation and may
be problematic for downstream control pipelines that require temporally aligned state estimates for stable feedback and prediction. In contrast, the proposed Page/Hankel-based method achieves an SNR of $19.2$\,dB under the same disturbance conditions.

In practical robotic systems, the true measurement noise statistics are typically unknown and may vary over time. Since the EKF relies on explicit specification of these noise moments to compute the Kalman gain, its performance is sensitive to covariance mismatch. Under mild mismatch in the assumed noise statistics, reconstruction quality yields an SNR of approximately $10.0$\,dB (Fig.~\ref{fig:denoise-ekf}, bottom). Our proposed Page/Hankel framework estimates the noise structure directly from the data through low-rank structure and adaptive SVHT thresholding, avoiding explicit noise variance specification while maintaining consistent reconstruction performance across noise regimes.

\begin{figure}[!tb]
    \centering
    \begin{minipage}{\linewidth}
        \centering
        \includegraphics[width=\linewidth]{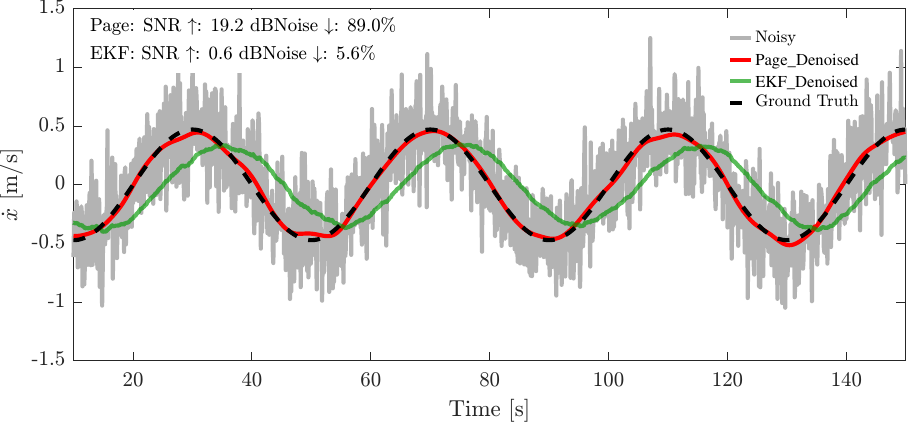}
        \vspace{0.6em}
        \includegraphics[width=\linewidth]{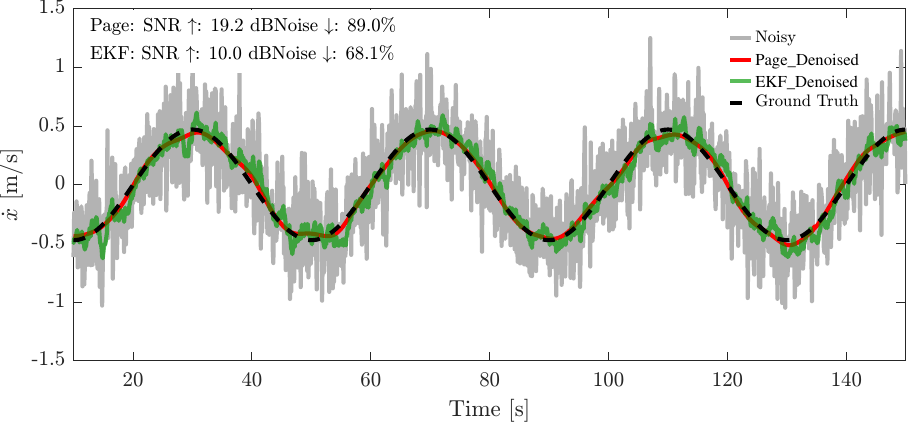}
        \caption{\scriptsize Comparison of Page/Hankel denoising and EKF filtering under temporally correlated Gaussian noise (AR(1)). Top: EKF with tuned covariance matrices (call-outs: SNR $+0.6$\,dB). Bottom: EKF under covariance mismatch (cal-outs: $+10.0$\,dB).}
        \label{fig:denoise-ekf}
    \end{minipage}
\end{figure}

\subsection{System Hardware and Experiments}\label{sec:stewart}

This section experimentally validates the complete method in a setup (Fig.~\ref{fig:stewart-structure}) motivated by a naval application. Currently, payloads are loaded into their silos by a ship-board crane while the ship docks in a port, where no ocean swells affect the safety of the loading process. It would be desirable to load these payloads while the ship is underway, potentially in heavy seas. In active sea states, the ship sways in response to ocean swells. Anticipating deck motion via a learned predictive model would enable the crane controller to compensate during payload delivery.

The testbed is a scaled crane mounted on an \emph{E2M eMove eM6-300-1500} five-DoF Stewart platform that can reproduce wave-induced ship deck motions. A VN-100 IMU measures orientation (quaternion) and angular velocity of the simulated moving base at 30\,Hz. These measurements correspond to the platform (deck) motion only, which is modeled as a time-varying base motion input to the crane-payload system. Using these IMU data, the framework generates a $31$-step horizon open-loop forecast ($\approx$1.0\,s) of the platform motion. Payload and crane dynamics are not directly predicted and are instead assumed to evolve downstream in response to the predicted base motion.
\begin{figure}[!tb] 
    \centering \includegraphics[width=1.0\linewidth]{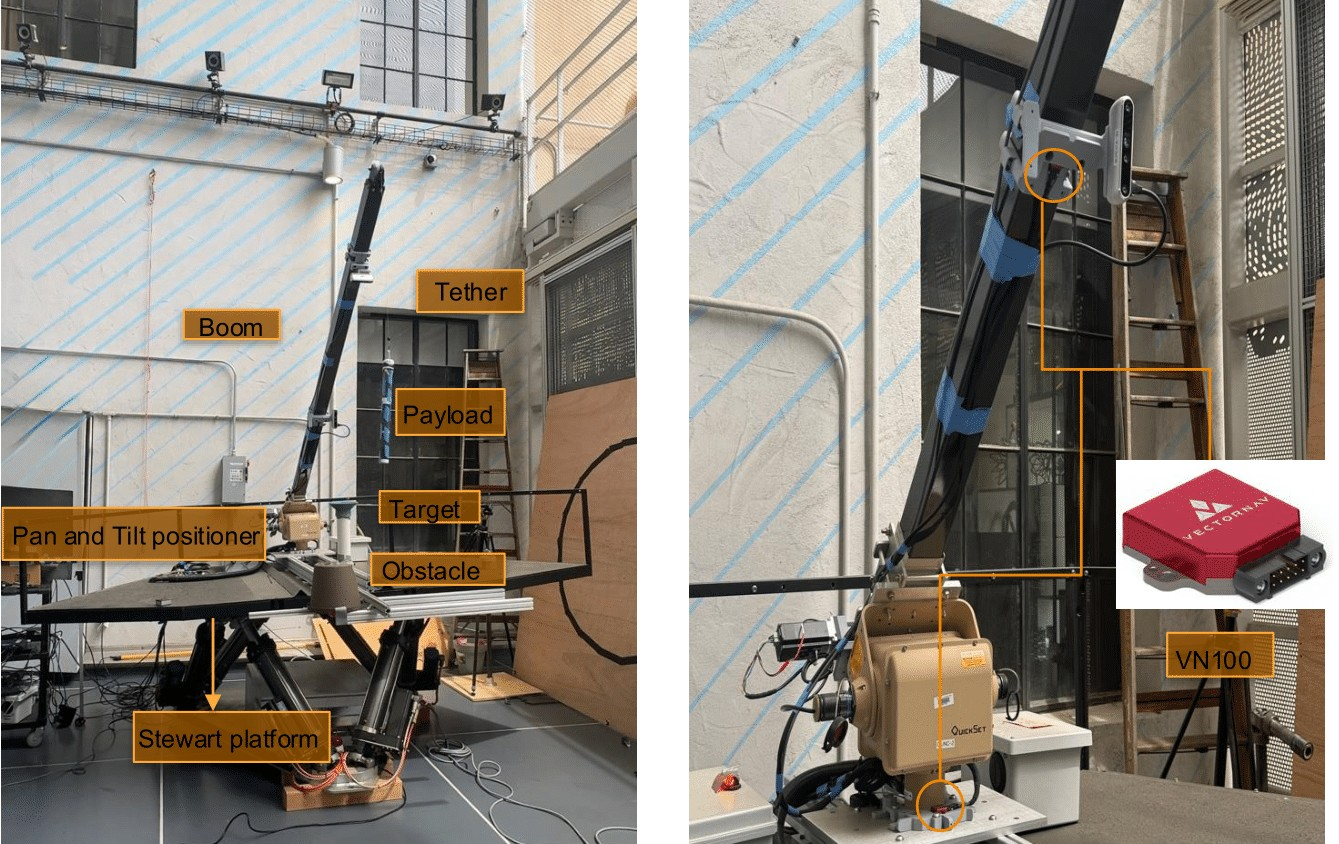}
    \caption{Stewart‐platform testbed. \textbf{Left:} Moving‐base crane with target, obstacle, payload, and arm/tip cameras. \textbf{Right:} Base-mounted VectorNav VN-100 IMU supplying orientation and angular rates to the sliding window Hankel-DMD predictor.}
    \label{fig:stewart-structure}
\end{figure}
This setup enables two evaluations: (i) whether the variance-stable denoising observed in simulation persists under real IMU noise and (ii) whether short-horizon sliding window Hankel-DMD predictions remain within a bounded threshold suitable for MPC integration. Fig.~\ref{fig:stewart-prediction} (top) shows a representative trajectory where $N=250$ sample context buffer (gray) feeds the pipeline, generating $31$-step forecasts (red) aligned with the ground truth (dashed black). The open-loop predictor achieves an RMSE of $0.012$\,m/s. 

Given a fixed error-tolerance $\varepsilon=0.04$\footnote{The tolerance $\varepsilon$ is chosen based on prior operational constraints of the experimental platform. In earlier implementations using a baseline constant-velocity predictor, prediction errors up to $0.08$\,m were acceptable for reliable payload insertion. We therefore set $\varepsilon = 0.04$ as a conservative threshold that remains consistent with the hardware constraints.}, we define a violation-duration metric on the prediction error $e_t$ as
\[
  J_t \;=\; \Delta t \sum_{t=1}^{N_h} \mathbf{1}\!\big(e_t \ge \varepsilon\big), \quad \text{with} \quad e_t := \lVert\bar{x}_t - x_t\rVert_2
\]
where $N_h$ defines the prediction horizon, $\bar{x}_t$ the prediction forecast, $x_t$ the measured ground truth, $\Delta t$ the sample period, and $\mathbf{1}(\cdot)$ the indicator (1 if the condition holds, 0 otherwise). In Fig.~\ref{fig:stewart-prediction} (bottom), $J_t=89.0$\,s ($1.6\%$ of horizon\footnote{The fraction of the horizon spent above threshold,
$ \%\text{violating} \;=\; 100 \times \frac{J_t}{T_{\mathrm{hor}}}, 
  \qquad T_{\mathrm{hor}} := N_h\,\Delta t.$}), certifying that prediction errors are bounded within the tolerance for $98.4\%$ of the time.

Further investigation of Fig.~\ref{fig:stewart-prediction} (top) reveals occasional transient spikes in the predicted trajectory, after which the forecasts quickly return to a nominal accuracy. These anomalies arise when the local embedding window captures a regime shift or abrupt disturbance, briefly mis-aligning the rank estimate. Importantly, such events are short-lived and self-correct as the sliding buffer refreshes, with the subsequent predictions re-stabilizing around the future measurements. This highlights both the adaptivity and the finite-sample sensitivity of the method, i.e., while transient outliers can occur, the framework consistently recovers without persistent drift.

\begin{figure}[tb]
\vskip -0.1 true in
    \centering
    \includegraphics[width=0.70\linewidth]{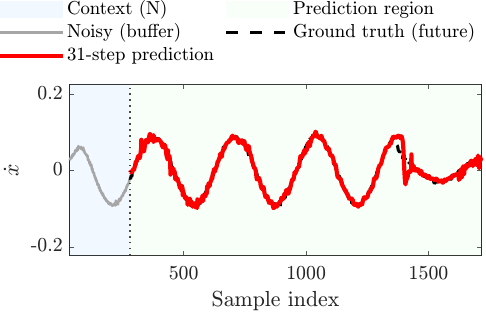}
    \vspace{1ex}
    \includegraphics[width=0.70\linewidth]{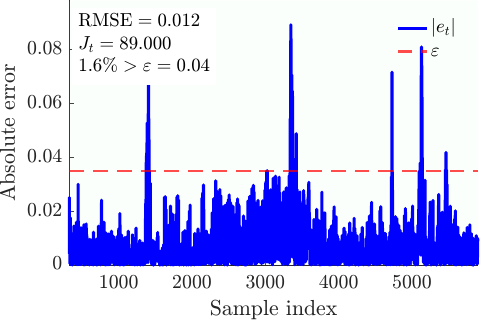}
  \vskip -0.08 true in
    \caption{\footnotesize Top: context buffer ($N=250$, gray) and 31-step forecasts (red) overlaid with ground truth (black dashed). Bottom: absolute prediction error $|e_t|$ (blue) compared to threshold $\varepsilon$ (red dashed).}
    \label{fig:stewart-prediction}
\end{figure}

Fig.~\ref{fig:stewart-spectra} illustrates the evolution of the eigenvalues of the learned predictors across sliding windows. As the system behavior changes along the trajectory, the models adapt accordingly, and the eigenvalues shift smoothly while consistently remaining within the unit circle, ensuring Schur stability.
The gradual evolution of the spectrum highlights the stability and time-varying nature of the local Hankel-DMD models throughout the experiment.

\begin{figure}[htb]
    \centering
    \includegraphics[height=3.5cm]{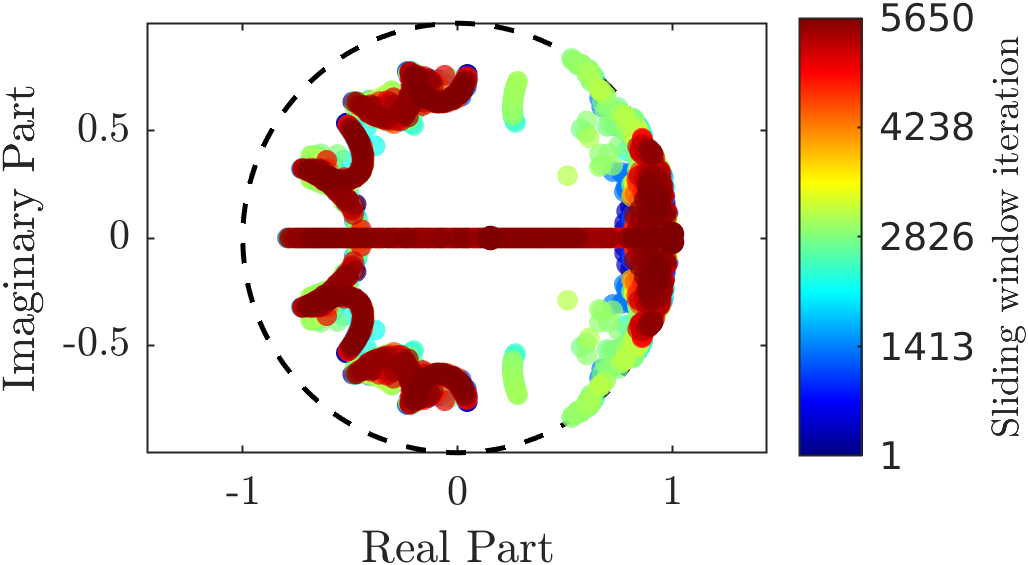}
    \vskip -0.02 true in
    \caption{\footnotesize Evolution of eigenvalue across sliding windows: full spectra with iteration-colored progression.}
    \label{fig:stewart-spectra}
\end{figure}
Together, these results demonstrate that our framework not only denoises real sensor streams but also yields stable  and bounded predictions that are suitable for downstream planning and control applications. A real-time demonstration of the moving-base prediction and MPC integration is shown in the supplementary video~\cite{kombo2025icravideo}.    

\subsection{Parameter Sensitivity and Computational Analysis}\label{sec:ablation}

While Sec.\ref{sec:stewart} validates hardware performance using a selected configuration of the hyper-parameters ($N=250$, $L=10$, $J=20$), we now examine the sensitivity of the prediction accuracy and computational cost to the embedding length $L$ and the number of Cadzow iterations $J$.
\begin{table}[th]
\centering
\captionsetup{font=footnotesize}
\caption{Effect of Cadzow iterations $J$ on runtime and prediction error ($N=250$, $L=10$). Cadzow time corresponds to the iterative Cadzow projection up to a maximum of $J$ iterations, with early termination when the relative tolerance $\texttt{tol}=10^{-6}$ is satisfied.}
\label{tab:cadzow_appendix}
\scriptsize
\setlength{\tabcolsep}{6pt}
\begin{tabular}{c c c c c c c c}
\toprule
$N$ & $L$ & $J$ & 
Cadzow Time (ms) & 
Total Time (ms) & 
RMSE \\
\midrule
250 & 10 & 1  & $0.6772 \pm 0.1673$  & $2.3220 \pm 0.5380$  & $0.012901$ \\
250 & 10 & 5  & $2.2844 \pm 0.0812$  & $4.0004 \pm 0.1495$  & $0.012601$ \\
250 & 10 & 10 & $4.1299 \pm 0.0746$  & $5.8169 \pm 0.1208$  & $0.011727$ \\
250 & 10 & 20 & $7.9089 \pm 0.0642$  & $9.5817 \pm 0.0825$  & $0.011505$ \\
250 & 10 & 30 & $11.7770 \pm 0.5206$ & $13.4480 \pm 0.5879$ & $0.011609$ \\
\bottomrule
\end{tabular}    
\end{table}

\textbf{Cadzow iterations $J$:} Based on Table~\ref{tab:cadzow_appendix}, with $N = 250$ and $L = 10$ fixed, increasing the number of Cadzow iterations improves prediction accuracy up to $J=20$, reducing RMSE from $0.0129$ ($J=1$) to $0.0115$ ($J=20$). Beyond this point, additional iterations provide negligible accuracy gains while increasing computational cost approximately linearly. These results indicate that moderate Cadzow iterations provide a favorable trade-off between accuracy and runtime.

\textbf{Embedding length $L$:} As evidenced in Table~\ref{tab:embedding_ablation} with $N=250$ and $J=20$ fixed, the prediction accuracy improves as the $L$ increases up to an intermediate value. RMSE decreases from $0.060$ at $L=4$ to $0.0108$ at $L=12$, with a slight degradation at $L=13$. This reflects a finite-data trade-off where larger embeddings increase model expressivity but reduce the number of snapshot pairs of $N/L$, which may degrade the matrix conditioning and stability of the locally linear dynamical model. Runtime grows moderately with $L$, with the Cadzow projection constituting the dominant over the total computational time. A comprehensive ablation study analyzing the sensitivity of RMSE, Cadzow runtime, and total computation time to the hyper-parameters is summarized in the Appendix (Table~\ref{tab:ablation_full1}).

\begin{table}[h]
\centering
\captionsetup{font=footnotesize}
\caption{Effect of embedding length $L$ on total computation time and prediction error ($N=250$, $J=20$). Cadzow time corresponds to the iterative Cadzow projection up to a maximum of $J$ iterations, with early termination when the relative tolerance $\texttt{tol}=10^{-6}$ is satisfied.}
\label{tab:embedding_ablation}
\scriptsize
\setlength{\tabcolsep}{6pt}
\begin{tabular}{c c c c c c}
\toprule
$N$ & $L$ & $J$ & 
Cadzow Time (ms) & 
Total Time (ms) & 
RMSE \\
\midrule
250 & 4  & 20 & $7.8140 \pm 3.2639$  & $9.3070 \pm 4.8364$  & $0.060471$ \\
250 & 6  & 20 & $8.1512 \pm 1.8150$  & $9.9204 \pm 2.1636$  & $0.017239$ \\
250 & 8  & 20 & $7.8828 \pm 0.4858$  & $9.5510 \pm 0.5851$  & $0.013038$ \\
250 & 10 & 20 & $7.9628 \pm 0.5314$  & $9.5880 \pm 0.6338$  & $0.011505$ \\
250 & 12 & 20 & $8.1490 \pm 0.5678$  & $9.8960 \pm 0.6983$  & $0.010766$ \\
\bottomrule
\end{tabular}
\end{table}

\section{Conclusion And Future Work} 
\label{sec:conc}

We introduced a Page–Hankel SVHT framework for learning short-horizon predictive models of moving obstacles from noisy data streams. Simulations (Gaussian and heavy-tailed noise) showed good variance-stable, distribution-agnostic performance, and hardware tests with IMU data produce reliable forecasts that can support downstream control. Future work will explore online adaptation, tensorized delayed embeddings for multi-axis consistency, and integration within an MPC-based prediction framework.

\bibliographystyle{IEEEtran}
\bibliography{Bib/refs}
\clearpage
\appendix
\begin{table}[H]
\centering
\caption{Full ablation study over buffer size $N$, embedding length $L$, and Cadzow iterations $J$. Mean runtime per update and prediction RMSE are reported. Parameter combinations that violate the embedding feasibility condition $N \geq L^2$ are omitted from the table.}
\label{tab:ablation_full1}
\scriptsize
\setlength{\tabcolsep}{6pt}
\begin{tabular}{c c c c c c}
\toprule $N$ & $L$ & $J$ & Cadzow Time (ms) & Total Time (ms) & RMSE \\ 
\midrule
120 & 4 & 1  & 0.3832 $\pm$ 0.0699 & 2.1726 $\pm$ 0.4302 & 0.041684 \\
120 & 4 & 5  & 1.1880 $\pm$ 0.0487 & 2.8992 $\pm$ 0.1233 & 0.061402 \\
120 & 4 & 10 & 2.2267 $\pm$ 0.2437 & 3.9636 $\pm$ 0.4341 & 0.066647 \\
120 & 4 & 20 & 4.2195 $\pm$ 0.1328 & 5.9419 $\pm$ 0.2008 & 0.069529 \\
120 & 4 & 30 & 6.2127 $\pm$ 0.1754 & 7.9390 $\pm$ 0.2355 & 0.070924 \\

120 & 6 & 1  & $0.3728 \pm 0.0153$ & $2.0764 \pm 0.0587$ & 0.027131 \\
120 & 6 & 5  & $1.2558 \pm 0.1056$ & $3.0165 \pm 0.2507$ & 0.025013 \\
120 & 6 & 10 & $2.2823 \pm 0.1566$ & $4.0130 \pm 0.2676$ & 0.024226 \\
120 & 6 & 20 & $4.3531 \pm 0.1214$ & $6.0952 \pm 0.1739$ & 0.023255 \\
120 & 6 & 30 & $6.7266 \pm 0.9329$ & $8.5594 \pm 1.2135$ & 0.022915 \\

120 & 8 & 1  & $0.3852 \pm 0.0144$ & $2.1020 \pm 0.0528$ & 0.019806 \\
120 & 8 & 5  & $1.2703 \pm 0.0366$ & $3.0017 \pm 0.0738$ & 0.018468 \\
120 & 8 & 10 & $2.3354 \pm 0.0593$ & $4.0732 \pm 0.1001$ & 0.017621 \\
120 & 8 & 20 & $4.4671 \pm 0.1121$ & $6.2143 \pm 0.1561$ & 0.016687 \\
120 & 8 & 30 & $6.6191 \pm 0.1691$ & $8.3765 \pm 0.2188$ & 0.016714 \\

120 & 10 & 1  & $0.3967 \pm 0.0152$ & $2.2462 \pm 0.0592$ & 0.016142 \\
120 & 10 & 5  & $1.3085 \pm 0.0400$ & $3.1728 \pm 0.0792$ & 0.015643 \\
120 & 10 & 10 & $2.4109 \pm 0.0630$ & $4.2827 \pm 0.1028$ & 0.014795 \\
120 & 10 & 20 & $4.6101 \pm 0.1068$ & $6.4900 \pm 0.1546$ & 0.015392 \\
120 & 10 & 30 & $6.8157 \pm 0.1631$ & $8.7026 \pm 0.2137$ & 0.015196 \\
\midrule
160 & 4 & 1  & $0.4536 \pm 0.0177$ & $2.1603 \pm 0.0641$ & 0.035327 \\
160 & 4 & 5  & $1.5024 \pm 0.0415$ & $3.2191 \pm 0.0820$ & 0.053198 \\
160 & 4 & 10 & $2.8624 \pm 0.2153$ & $4.6078 \pm 0.3510$ & 0.059923 \\
160 & 4 & 20 & $5.4532 \pm 0.1256$ & $7.1880 \pm 0.1725$ & 0.064959 \\
160 & 4 & 30 & $8.2628 \pm 0.6137$ & $10.0730 \pm 0.7768$ & 0.067648 \\

160 & 6 & 1  & $0.5030 \pm 0.1025$ & $2.3446 \pm 0.4554$ & 0.023096 \\
160 & 6 & 5  & $1.6202 \pm 0.1237$ & $3.4279 \pm 0.2614$ & 0.021213 \\
160 & 6 & 10 & $3.3674 \pm 0.6949$ & $5.4237 \pm 1.1209$ & 0.020821 \\
160 & 6 & 20 & $5.9851 \pm 0.8174$ & $7.9146 \pm 1.1202$ & 0.020225 \\
160 & 6 & 30 & $8.6164 \pm 1.0053$ & $10.4830 \pm 1.2799$ & 0.020171 \\

160 & 8 & 1  & $0.4799 \pm 0.0330$ & $2.1996 \pm 0.1358$ & 0.016925 \\
160 & 8 & 5  & $1.5901 \pm 0.0814$ & $3.3196 \pm 0.1740$ & 0.016132 \\
160 & 8 & 10 & $2.9408 \pm 0.1819$ & $4.6836 \pm 0.3040$ & 0.015741 \\
160 & 8 & 20 & $5.7999 \pm 0.5617$ & $7.5973 \pm 0.7701$ & 0.014883 \\
160 & 8 & 30 & $8.6106 \pm 1.1035$ & $10.4320 \pm 1.3751$ & 0.014916 \\

160 & 10 & 1  & $0.5095 \pm 0.0648$ & $2.4343 \pm 0.2764$ & 0.013873 \\
160 & 10 & 5  & $1.6393 \pm 0.0952$ & $3.5114 \pm 0.2040$ & 0.013687 \\
160 & 10 & 10 & $3.1312 \pm 0.2806$ & $5.0611 \pm 0.4773$ & 0.012882 \\
160 & 10 & 20 & $6.0941 \pm 0.6909$ & $8.0680 \pm 0.9624$ & 0.012936 \\
160 & 10 & 30 & $8.6109 \pm 0.4076$ & $10.4910 \pm 0.5188$ & 0.013258 \\

160 & 12 & 1  & $0.5086 \pm 0.0460$ & $2.4802 \pm 0.1959$ & 0.013217 \\
160 & 12 & 5  & $1.7179 \pm 0.1428$ & $3.7125 \pm 0.3134$ & 0.013185 \\
160 & 12 & 10 & $3.4554 \pm 0.5700$ & $5.6472 \pm 0.9401$ & 0.012726 \\
160 & 12 & 20 & $6.4075 \pm 0.5559$ & $8.5446 \pm 0.7773$ & 0.012999 \\
160 & 12 & 30 & $9.6981 \pm 1.1509$ & $11.9160 \pm 1.4566$ & 0.013409 \\
\midrule
200 & 4 & 1  & $0.6268 \pm 0.1352$ & $2.5634 \pm 0.5278$ & 0.032814 \\
200 & 4 & 5  & $2.0187 \pm 0.1987$ & $3.9319 \pm 0.4027$ & 0.046911 \\
200 & 4 & 10 & $3.6197 \pm 0.4050$ & $5.4874 \pm 0.6579$ & 0.052967 \\
200 & 4 & 20 & $6.4935 \pm 0.4080$ & $8.2262 \pm 0.5494$ & 0.057141 \\
200 & 4 & 30 & $9.4777 \pm 0.3388$ & $11.1760 \pm 0.3938$ & 0.059086 \\

200 & 6 & 1  & $0.5512 \pm 0.0105$ & $2.2510 \pm 0.0282$ & 0.021260 \\
200 & 6 & 5  & $1.8868 \pm 0.1215$ & $3.6242 \pm 0.2418$ & 0.018846 \\
200 & 6 & 10 & $3.4322 \pm 0.0458$ & $5.1503 \pm 0.0629$ & 0.018136 \\
200 & 6 & 20 & $6.6163 \pm 0.0732$ & $8.3405 \pm 0.0887$ & 0.017013 \\
200 & 6 & 30 & $9.7807 \pm 0.1197$ & $11.5080 \pm 0.1366$ & 0.016719 \\

200 & 8 & 1  & $0.5607 \pm 0.0106$ & $2.2701 \pm 0.0257$ & 0.015659 \\
200 & 8 & 5  & $1.8957 \pm 0.0287$ & $3.6222 \pm 0.0400$ & 0.014751 \\
200 & 8 & 10 & $3.5992 \pm 0.2598$ & $5.3798 \pm 0.4134$ & 0.013911 \\
200 & 8 & 20 & $7.3050 \pm 0.4703$ & $9.2472 \pm 0.6550$ & 0.013121 \\
200 & 8 & 30 & $10.3450 \pm 1.1833$ & $12.1630 \pm 1.4163$ & 0.012931 \\

200 & 10 & 1  & $0.5700 \pm 0.0182$ & $2.3007 \pm 0.0615$ & 0.012736 \\
200 & 10 & 5  & $1.9149 \pm 0.1157$ & $3.6880 \pm 0.2079$ & 0.012313 \\
200 & 10 & 10 & $3.5207 \pm 0.0820$ & $5.2712 \pm 0.1327$ & 0.011537 \\
200 & 10 & 20 & $6.7313 \pm 0.1696$ & $8.4971 \pm 0.2418$ & 0.011565 \\

\bottomrule 
\end{tabular} 
\end{table} 

\begin{table}[!t]
\centering 
\label{tab:ablation_full2} 
\scriptsize 
\setlength{\tabcolsep}{6pt} 
\begin{tabular}{c c c c c c} 
\toprule $N$ & $L$ & $J$ & Cadzow Time (ms) & Total Time (ms) & RMSE \\
\midrule
200 & 10 & 30 & $10.2540 \pm 1.1220$ & $12.1300 \pm 1.3003$ & 0.011832 \\

200 & 12 & 1  & $0.5946 \pm 0.0545$ & $2.4552 \pm 0.1724$ & 0.011933 \\
200 & 12 & 5  & $1.9736 \pm 0.1050$ & $3.8558 \pm 0.1780$ & 0.011858 \\
200 & 12 & 10 & $3.6454 \pm 0.1021$ & $5.5318 \pm 0.1534$ & 0.011015 \\
200 & 12 & 20 & $6.9299 \pm 0.3702$ & $8.8380 \pm 0.4753$ & 0.011267 \\
200 & 12 & 30 & $10.2400 \pm 0.2077$ & $12.1330 \pm 0.2815$ & 0.011693 \\

\midrule
250 & 4 & 1  & $0.6227 \pm 0.0335$ & $2.0243 \pm 0.0759$ & 0.035939 \\
250 & 4 & 5  & $2.0924 \pm 0.0602$ & $3.5308 \pm 0.1049$ & 0.050362 \\
250 & 4 & 10 & $3.8997 \pm 0.0895$ & $5.3408 \pm 0.1354$ & 0.056567 \\
250 & 4 & 20 & $7.6126 \pm 0.3603$ & $9.1088 \pm 0.4394$ & 0.060471 \\
250 & 4 & 30 & $11.4740 \pm 0.3172$ & $13.0500 \pm 0.3914$ & 0.062084 \\

250 & 6 & 1  & $0.6576 \pm 0.0459$ & $2.3620 \pm 0.1468$ & 0.022799 \\
250 & 6 & 5  & $2.2268 \pm 0.0740$ & $3.9937 \pm 0.1471$ & 0.019384 \\
250 & 6 & 10 & $4.0392 \pm 0.0825$ & $5.7479 \pm 0.1180$ & 0.018490 \\
250 & 6 & 20 & $8.0214 \pm 0.2732$ & $9.8137 \pm 0.3729$ & 0.017239 \\
250 & 6 & 30 & $11.9460 \pm 0.3153$ & $13.7950 \pm 0.4033$ & 0.016810 \\

250 & 8 & 1  & $0.6638 \pm 0.0318$ & $2.3627 \pm 0.0763$ & 0.016137 \\
250 & 8 & 5  & $2.2586 \pm 0.0899$ & $4.0174 \pm 0.1413$ & 0.014821 \\
250 & 8 & 10 & $4.2076 \pm 0.1442$ & $5.9728 \pm 0.2176$ & 0.013984 \\
250 & 8 & 20 & $8.1604 \pm 0.2554$ & $9.9737 \pm 0.3562$ & 0.013038 \\
250 & 8 & 30 & $12.1310 \pm 0.6385$ & $13.9770 \pm 0.7098$ & 0.012717 \\

250 & 10 & 1  & $0.6882 \pm 0.0388$ & $2.3959 \pm 0.1036$ & 0.012901 \\
250 & 10 & 5  & $2.3419 \pm 0.0939$ & $4.1217 \pm 0.1778$ & 0.012601 \\
250 & 10 & 10 & $4.3700 \pm 0.1337$ & $6.1755 \pm 0.2236$ & 0.011727 \\
250 & 10 & 20 & $8.4571 \pm 0.3276$ & $10.3050 \pm 0.4068$ & 0.011505 \\
250 & 10 & 30 & $12.3650 \pm 0.3129$ & $14.1810 \pm 0.3991$ & 0.011609 \\

250 & 12 & 1  & $0.7249 \pm 0.0505$ & $2.6276 \pm 0.1058$ & 0.011971 \\
250 & 12 & 5  & $2.4146 \pm 0.1028$ & $4.3520 \pm 0.2096$ & 0.011652 \\
250 & 12 & 10 & $4.5296 \pm 0.1474$ & $6.4944 \pm 0.2628$ & 0.010894 \\
250 & 12 & 20 & $8.7088 \pm 0.2401$ & $10.6870 \pm 0.3421$ & 0.010766 \\
250 & 12 & 30 & $12.7920 \pm 0.4162$ & $14.7800 \pm 0.5487$ & 0.010877 \\

250 & 15 & 1  & $0.7355 \pm 0.0437$ & $2.6830 \pm 0.1201$ & 0.011677 \\
250 & 15 & 5  & $2.5120 \pm 0.0906$ & $4.5119 \pm 0.1665$ & 0.013119 \\
250 & 15 & 10 & $4.7152 \pm 0.1292$ & $6.7621 \pm 0.2094$ & 0.010656 \\
250 & 15 & 20 & $9.0523 \pm 0.3065$ & $11.1050 \pm 0.4194$ & 0.010824 \\
250 & 15 & 30 & $13.0690 \pm 0.3104$ & $15.0780 \pm 0.4002$ & 0.011014 \\
\midrule
300 & 4  & 1  & $0.7228 \pm 0.0475$  & $2.1547 \pm 0.0862$  & 0.035891 \\
300 & 4  & 5  & $2.4456 \pm 0.0619$  & $3.9187 \pm 0.1027$  & 0.050234 \\
300 & 4  & 10 & $4.5938 \pm 0.2663$  & $6.0738 \pm 0.3200$  & 0.056825 \\
300 & 4  & 20 & $8.8316 \pm 0.1681$  & $10.3190 \pm 0.2177$ & 0.061132 \\
300 & 4  & 30 & $13.1310 \pm 0.1631$ & $14.6460 \pm 0.1973$ & 0.063097 \\

300 & 6  & 1  & $0.7340 \pm 0.0299$  & $2.3591 \pm 0.0729$  & 0.023009 \\
300 & 6  & 5  & $2.5173 \pm 0.0581$  & $4.2085 \pm 0.0929$  & 0.018807 \\
300 & 6  & 10 & $4.6965 \pm 0.0843$  & $6.3929 \pm 0.1214$  & 0.017704 \\
300 & 6  & 20 & $9.0858 \pm 0.2819$  & $10.7900 \pm 0.3144$ & 0.016135 \\
300 & 6  & 30 & $13.3950 \pm 0.2552$ & $15.1130 \pm 0.2863$ & 0.015616 \\

300 & 8  & 1  & $0.7423 \pm 0.0265$  & $2.3620 \pm 0.0696$  & 0.016342 \\
300 & 8  & 5  & $2.5460 \pm 0.0634$  & $4.2264 \pm 0.0965$  & 0.014672 \\
300 & 8  & 10 & $4.7722 \pm 0.0916$  & $6.4750 \pm 0.1337$  & 0.013866 \\
300 & 8  & 20 & $9.1867 \pm 0.1419$  & $10.9120 \pm 0.1681$ & 0.013119 \\
300 & 8  & 30 & $13.6910 \pm 0.1940$ & $15.4220 \pm 0.2280$ & 0.012822 \\

300 & 10 & 1  & $0.7727 \pm 0.0302$  & $2.4287 \pm 0.0692$  & 0.012846 \\
300 & 10 & 5  & $2.6267 \pm 0.0650$  & $4.3337 \pm 0.1003$  & 0.012500 \\
300 & 10 & 10 & $4.9612 \pm 0.0886$  & $6.6757 \pm 0.1236$  & 0.011835 \\
300 & 10 & 20 & $9.5891 \pm 0.1361$  & $11.3260 \pm 0.1747$ & 0.011621 \\
300 & 10 & 30 & $14.1600 \pm 0.2190$ & $15.9070 \pm 0.2569$ & 0.011731 \\

300 & 12 & 1  & $0.8090 \pm 0.0356$  & $2.5198 \pm 0.0765$  & 0.011694 \\
300 & 12 & 5  & $2.7481 \pm 0.0598$  & $4.5100 \pm 0.0973$  & 0.011709 \\
300 & 12 & 10 & $5.1364 \pm 0.0972$  & $6.8915 \pm 0.1337$  & 0.011250 \\
300 & 12 & 20 & $9.8436 \pm 0.1819$  & $11.6140 \pm 0.2178$ & 0.011282 \\
300 & 12 & 30 & $14.6930 \pm 0.3004$ & $16.4770 \pm 0.3285$ & 0.011323 \\

300 & 15 & 1  & $0.8220 \pm 0.0368$  & $2.6515 \pm 0.0764$  & 0.011138 \\
300 & 15 & 5  & $2.8201 \pm 0.0738$  & $4.6866 \pm 0.1221$  & 0.013286 \\
300 & 15 & 10 & $5.3040 \pm 0.1636$  & $7.2028 \pm 0.2130$  & 0.011434 \\
300 & 15 & 20 & $10.1500 \pm 0.2119$ & $12.0700 \pm 0.2477$ & 0.011694 \\
300 & 15 & 30 & $15.0820 \pm 0.5309$ & $17.0180 \pm 0.5609$ & 0.011322 \\

\bottomrule 
\end{tabular} 
\end{table}

\end{document}